\documentclass[11pt]{article}

\usepackage[preprint]{acl}

\usepackage{times}
\usepackage{latexsym}
\usepackage[T1]{fontenc}
\usepackage[utf8]{inputenc}
\usepackage{microtype}
\usepackage{inconsolata}
\usepackage{graphicx}

\usepackage[skins]{tcolorbox}

\usepackage{tikz}
\usepackage{xcolor}

\usepackage{amsmath,amssymb,amsthm}
\usepackage{mathtools}

\usepackage{bbding}
\usepackage{booktabs}
\usepackage{array}

\usepackage{url}

\definecolor{textgray}{RGB}{50,50,50}
\definecolor{bordercolor}{RGB}{180,180,180}

\definecolor{lightgraybg}{HTML}{F7F7F7}
\definecolor{leftbar}{HTML}{E74C3C}
\definecolor{validgreen}{HTML}{2ECC71}

\newtcolorbox{samplebox}{
  enhanced,
  colback=white,
  boxrule=0.75pt,
  arc=2pt,
  left=4pt,
  right=4pt,
  top=4pt,
  bottom=4pt,
  fontupper=\small
}

\definecolor{comp1}{HTML}{4A90D9}
\definecolor{comp2}{HTML}{E8913A}
\definecolor{comp3}{HTML}{D35F5F}
\definecolor{comp4}{HTML}{9B59B6}
\definecolor{optcomp}{HTML}{45B5AA}

\definecolor{plotblue}{RGB}{70, 130, 180}
\definecolor{plotred}{RGB}{255, 1, 1}
\definecolor{plotdarkred}{RGB}{139, 0, 0}
\definecolor{plotgold}{RGB}{255, 215, 0}

\newcommand{\badge}[2][red]{%
  \tikz[baseline={([yshift=-1pt]b.base)}]
    \node[circle, fill=#1, text=white, inner sep=1pt,
    minimum size=1.0em, font=\bfseries\scriptsize] (b) {#2};%
}

\newtheorem{proposition}{Proposition}

\title{UFAL-CUNI at SemEval-2026 Task 11: An Efficient Modular Neuro-symbolic Method for Syllogistic Reasoning}

\author{
  Ivan Kartáč \quad Kristýna Onderková \quad Jan Bronec \\
  \textbf{Zdeněk Kasner} \quad \textbf{Mateusz Lango} \quad \textbf{Ondřej Dušek} \\[2pt]
  Institute of Formal and Applied Linguistics \\
  Faculty of Mathematics and Physics, Charles University \\[2pt]
  \texttt{\{kartac,onderkova,bronec,kasner,lango,odusek\}@ufal.mff.cuni.cz}
}

\def\pz{\phantom{0}}

\begin{document}
\maketitle
\begin{abstract}

This paper describes our system submitted to SemEval-2026 Task 11: Disentangling Content and Formal Reasoning in Large Language Models. We present an efficient modular neuro-symbolic approach, combining a symbolic prover with small reasoning LLMs (4B parameters). The system consists of an LLM-based parser that translates natural language syllogisms to a first-order logic (FOL) representation, an automated theorem prover, and two optional modules: machine translation for multilingual inputs and a symbolic retrieval component for the identification of relevant premises. The system achieves competitive accuracy and relatively low content effect on most subtasks. Our ablations show that this approach outperforms LLM-based zero-shot baselines in this parameter size range, but also reveal limited multilingual capabilities of small LLMs. Finally, we include a discussion of the task's main ranking metric and analyze its limitations.\footnote{The code and data are available at \url{https://github.com/ivankartac/SemEval-2026_task11}} 
\end{abstract}

\section{Introduction}
\label{sec:introduction}

\begin{figure*}[t]
    \centering
    \includegraphics[width=\textwidth]{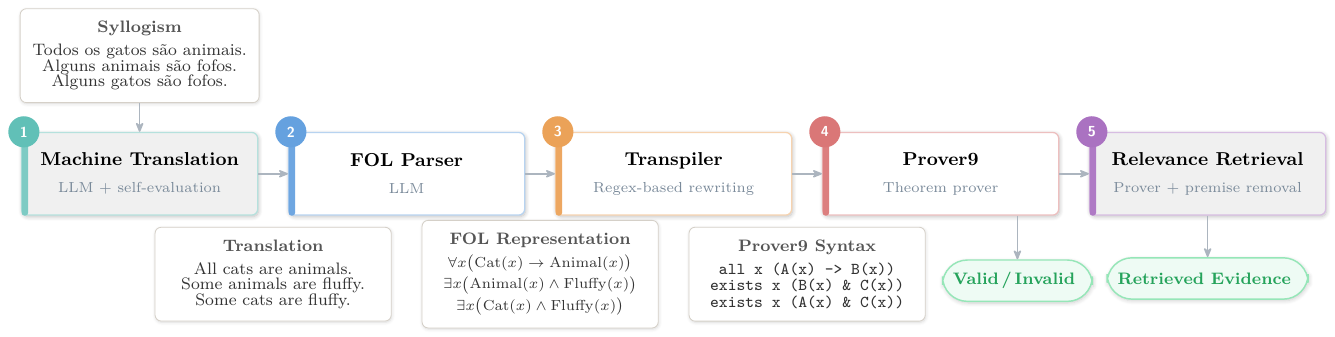}
    \caption{Overview of our system: \badge[optcomp]{1} \emph{Machine Translation} translates multilingual inputs to English (for subtasks 3 and 4); \badge[comp1]{2} \emph{FOL Parser} translates natural language propositions into FOL formulas in LaTeX format; \badge[comp2]{3} \emph{Transpiler} rewrites LaTeX formulas to the target Prover9 syntax; \badge[comp3]{4} \emph{Prover9} determines the validity of the syllogism; \badge[comp4]{5} \emph{Relevance Retrieval} removes irrelevant premises by using the prover to check which premises do not influence validity (for subtasks 2 and 4).}
    \label{fig:pipeline}
\end{figure*}

Large Language Models (LLMs) have demonstrated strong performance on a number of reasoning tasks, including syllogistic reasoning. However, LLMs suffer from various reasoning biases (\citealp{eisape-etal-2024-systematic,lampinen2024language}), particularly the \emph{content effect} \citep{evans1983conflict}, where the decision about the validity of a given argument is influenced by its compatibility with world knowledge.
The SemEval-2026 Task 11 \cite{valentino-etal-2026-semeval} on multilingual syllogistic reasoning addresses this issue, with the goals of building systems with content-independent reasoning and advancing our understanding of the content effect. 

In our SemEval-2026 Task 11 submission, we explore how small LLMs can be used for syllogistic reasoning and first-order logic (FOL) formalization and made robust to content effects. We propose an efficient neuro-symbolic approach, combining small reasoning LLMs (4B parameters) with a symbolic prover. 
Specifically, we first translate each natural language proposition into a FOL representation and then use an automated theorem prover to determine whether the conclusion is valid. Optionally, we use an additional LLM as a machine translation component for multilingual inputs, while the identification of relevant premises is addressed symbolically by the FOL prover.

Unlike existing neuro-symbolic approaches, we utilize a logic representation well-represented in the training data of the LLMs, rather than instructing them to provide formalizations directly in the target syntax. Specifically, we instruct the LLMs to first formalize the input in LaTeX notation, followed by a rule-based translation to the target syntax. We show that using this intermediate format leads to more reliable, higher-quality formal representations.

We apply our system to all four subtasks of SemEval-2026 Task 11, achieving an accuracy of around 95\% for most of them while keeping the content effect relatively low. Based on our ablation experiments, we find that our method is able to significantly reduce the content effect for LLMs of this size, in contrast to instructing them to directly reason about the validity of the syllogisms. To better understand the failure modes of the system, we present a detailed error analysis.

Finally, we present an empirical and theoretical analysis of the main ranking metric and show that it suffers from limited robustness.

\section{Related Work}
\label{sec:related_work}

\paragraph{Content effects in reasoning} Biases in human reasoning have been extensively studied in psychology \citep{tversky1974judgment}, including content effects in syllogistic reasoning \citep{evans1983conflict,klauer2000belief}. Recently, reasoning biases similar to those found in humans have been demonstrated for LLMs (\citealp{saparov2023languagemodelsgreedyreasoners,eisape-etal-2024-systematic,lampinen2024language}). \citet{kim-etal-2025-reasoning} use circuit discovery to show that while LLMs learn content-independent reasoning mechanisms, these are entangled with world knowledge and therefore prone to content effects. This issue has been addressed through various approaches, such as activation steering \citep{valentino2025mitigatingcontenteffectsreasoning}, prompting \citep{xu2024faithful}, supervised fine-tuning (\citealp{bertolazzi-etal-2024-systematic,zhou-etal-2025-dissecting}), or hybrid neuro-symbolic methods discussed below.

\paragraph{Neuro-symbolic approaches to logical reasoning} Recently, systems combining LLMs and symbolic provers have been proposed to address deductive reasoning. LINC \citep{olausson-etal-2023-linc} applies an LLM to translate natural language premises into FOL, followed by a symbolic prover to determine the validity of a conclusion. Logic-LM \citep{pan-etal-2023-logic} and Logic-LM++ \cite{kirtania-etal-2024-logic} translate the input into different symbolic formulations depending on the problem and apply a corresponding inference tool, such as FOL prover or SAT solver, followed by a refinement step in case the solver fails. 
\citet{quan-etal-2024-verification} combine LLMs with a theorem prover to automatically verify and improve natural language explanations used to evaluate models in natural language inference (NLI).
All these methods translate premises directly into the prover syntax, which can lead to suboptimal results, as we show in Section~\ref{sec:ablations}.

In contrast to these approaches, LogicGuide \citep{poesia2024certified} integrates a \emph{guide tool} within the LLM's decoding. The guide tool, based on the Peano theorem prover, computes a set of valid continuations in a given step, and the LLM selects one of them through constrained decoding.

\section{Task Description}
\label{sec:task_description}

SemEval-2026 Task 11 \cite{valentino-etal-2026-semeval} investigates how content influences language models when performing formal reasoning. The task evaluates models on the validity of Aristotelian syllogisms \cite{sep-aristotle-logic}. Those can align with or oppose common world knowledge, which allows us to measure the content effect on the reasoning \cite{evans1983conflict}. The task consists of four subtasks: (1) assessing validity of syllogisms, (2) selecting relevant premises for a conclusion, (3) assessing validity in multiple languages, and (4) selecting relevant premises in multiple languages.

\paragraph{Metrics}

The dataset samples are split into four groups depending on their logical validity and the real-world plausibility of their arguments.
The \emph{content effect} (CE) is the average of accuracy differences between valid and invalid samples and between plausible and implausible samples, where each accuracy difference is balanced with respect to the other feature \cite{valentino-etal-2026-semeval}.
In addition, standard \emph{validity accuracy} (Acc.) over all samples and the \emph{F1 score on premise relevance} (for subtasks 2 and 4) are measured.
Finally, the content effect is used to discount the overall accuracy and obtain the final combined score (CS):
\begin{equation}\label{eq:final_score}
    \text{CS} = \frac{\text{Acc}}{1 + \ln(1 + \text{CE})}
\end{equation}

\section{System Description}
\label{sec:system_description}

Our system uses a modular neuro-symbolic approach and consists of four main components: translator (for multilingual subtasks), FOL parser, transpiler, and a FOL prover (see Figure~\ref{fig:pipeline}).
\paragraph{Translator} For multilingual subtasks, we pre-translate the syllogisms from all source languages into English. We prompt an LLM to first provide a translation of a syllogism, followed by a self-evaluation step, and a potential correction step based on the self-evaluation feedback. Prompt templates for each step are shown in Figures~\ref{fig:prompt_translation}--\ref{fig:prompt_translation_feedback} in the Appendix.

\paragraph{FOL parser} We apply a small reasoning LLM to translate natural language syllogisms into FOL representations. To reduce the content effect, individual propositions are formalized one by one, each in a separate inference call. In each subsequent call, the LLM is provided with a mapping from natural language to FOL for the already formalized propositions to ensure consistent predicate names.
Since LaTeX notation is very common in training data for similar tasks, we hypothesize that LLMs could provide higher-quality FOL representations when explicitly instructed to generate them in LaTeX format rather than the target prover syntax.

\paragraph{Transpiler} To translate the FOL representations to the target prover syntax, we apply a regex-based rewriting with rules to translate the formalized syllogisms from LaTeX to the prover format.

\paragraph{FOL prover} To determine the validity of a formalized syllogism, we apply Prover9,\footnote{\url{https://www.cs.unm.edu/~mccune/prover9}} an automated theorem prover for first-order logic.

\paragraph{Relevant premise retrieval} In subtasks 2 and 4, we apply the following approach to identify relevant premises: We use a greedy algorithm that iterates through premises of a valid syllogism, drops each premise, and applies the prover to check if the proof still holds. If not, the premise is necessary for the conclusion and is labeled as relevant.

\section{Experiments}
\label{sec:experiments}

\subsection{Data}
\label{sec:experiments_data}

We use the first 500 training set examples supplied for subtask 1 as validation data and to construct additional synthetic validation sets for the other three sub-tasks. We synthesize multilingual data for subtasks 3 and 4 by translating the original validation set to target languages and sample irrelevant premises for subtasks 2 and 4 from the remaining examples in the training set. Appendix~\ref{sec:appendix_data} describes the dataset construction in more detail.

\subsection{Models}
\label{sec:experiments_models}

We use Qwen3 4B Thinking \citep{yang2025qwen3technicalreport} as a FOL parser in our main setup and Gemma 3 27B \citep{gemmateam2025gemma3technicalreport} as a translator. Since Qwen models occasionally return unparseable output, typically due to output token limits, we implement a retry mechanism with temperature sampling for these cases. Appendix~\ref{sec:appendix_models} provides more details on the models and the inference setup.

\section{Results}
\label{sec:results}

\subsection{Overall Scores}
\label{sec:main_results}

\begin{table}[t]
\centering
\small
\begin{tabular}{cccccc}
\toprule
Subtask & Acc.\hspace{1pt}$\uparrow$ & F1\hspace{1pt}$\uparrow$ & CE\hspace{1pt}$\downarrow$ & CS\hspace{1pt}$\uparrow$ & Rank \\
\midrule
1 & 95.29 & --    & 3.21 & 39.08 & 19th of 35 \\
2 & 97.37 & 96.84 & 3.30 & 39.49 & \pz 6th of 14 \\
3 & 93.75 & --    & 6.25 & 31.45 & \pz 8th of 13 \\
4 & 84.90 & 83.42 & 1.37 & 45.20 & \pz 4th of 15 \\
\bottomrule
\end{tabular}
\caption{Test set results from the official leaderboard. CS = combined score, Acc. = accuracy, F1 = premise F1, CE = total content effect, Rank = our system's rank in the official leaderboard.}
\label{tab:leaderboard}
\end{table}

The main leaderboard results are presented in Table~\ref{tab:leaderboard}. Despite the small parameter count of the models, our approach achieves competitive accuracy and a reasonably low content effect. The accuracy is preserved even in more complex subtasks, indicating that the approach is robust to irrelevant premises and multilingual variants. We present bootstrapped evaluation results with 95\% confidence intervals for both the validation and the test set in Table~\ref{tab:results_full} in the Appendix.

\subsection{Ablations}
\label{sec:ablations}

\begin{table}[t]
\centering
\small
\begin{tabular}{p{0.05cm}p{2.5cm}>{\centering\arraybackslash}p{0.65cm}>{\centering\arraybackslash}p{0.65cm}>{\centering\arraybackslash}p{0.65cm}>{\centering\arraybackslash}p{0.65cm}}
\toprule
\# & Setup & Acc.$\uparrow$ & F1\hspace{1pt}$\uparrow$ & CE\hspace{1pt}$\downarrow$ & CS\hspace{1pt}$\uparrow$ \\
\midrule
1 & Full                        & \textbf{95.83}   & -- & \pz 2.28   & 45.61 \\
  & - single-step & 95.81 & -- & \pz\textbf{2.14}   & \textbf{46.68} \\
  & - Prover9 format            & 85.97 & -- & 12.23 & 24.11 \\
  & - LLM prover         & 79.22 & -- & 20.21 & 19.58 \\
  & End-to-end                  & 84.17 & -- & 14.67 & 11.25 \\
\midrule
2 & Full                        & 91.17 & 87.77 & \pz 5.17 & \textbf{32.42} \\
  & - single-step & \textbf{91.76} & \textbf{89.68} & \pz 5.60 & 31.93 \\
  & - LLM retrieval       & 90.40 & 72.68 & \pz\textbf{4.76} & 30.43 \\
  & End-to-end                  & 83.40 & 64.11 & 14.92 & 19.65 \\
\midrule
3 & Full                 & \textbf{94.43} & -- & \pz\textbf{2.53} & \textbf{43.85} \\
  & - MT = Tiny Aya & 67.36 & -- & 23.62 & \pz 8.03 \\
  & - MT = Gemma3 4B & 87.18 & -- & \pz 5.14 & 15.96 \\
  & - MT = Qwen3 4B  & 79.75 & -- & \pz 7.09 & 13.41 \\
  & - MT = $\varnothing$  & 90.65 & -- & \pz 5.80 & 31.84 \\
  & End-to-end           & 83.01 & -- & 14.88 & 11.06 \\
\midrule
4 & Full                 & \textbf{90.67} & \textbf{85.10} & \pz 4.83 & 32.65 \\
  & - MT = $\varnothing$  & 85.85 & 78.14 & \pz \textbf{3.91} & \textbf{33.02} \\
  & End-to-end           & 81.39 & 60.72 & 17.83 & 18.11 \\
\bottomrule
\end{tabular}
\caption{Ablations of the pipeline modules (Qwen3 4B Thinking model). \# = subtask number. Metrics are explained in Table~\ref{tab:leaderboard}.}
\label{tab:pipeline_ablation}
\end{table}

Our ablations are primarily focused on the system components (Table~\ref{tab:pipeline_ablation}), but we also explore different model sizes and variants (Table~\ref{tab:model_ablation} in the Appendix). Table~\ref{tab:pipeline_ablation_full} in the Appendix includes 95\% confidence intervals obtained by bootstrap resampling. Prompt templates for all ablations are presented in Figures~\ref{fig:prompt_end_to_end}--\ref{fig:prompt_llm_prover} in the Appendix.

\paragraph{Zero-shot end-to-end classification} This ablation serves as a simple baseline, where the LLM is instructed to directly predict the validity of a natural language syllogism. For subtasks 2 and 4, we instruct the LLM to also provide indices of relevant premises. Table~\ref{tab:pipeline_ablation} shows a decrease of around 10 points in accuracy, more than 20 points lower F1, and a sharp increase in content effect across all subtasks.

\paragraph{Parsing directly to Prover9 syntax} To validate parsing through the intermediate LaTeX format, we design an ablation where the LLM is instructed to parse propositions directly to Prover9 syntax. As the results in Table \ref{tab:pipeline_ablation} show, the scores obtained with direct parsing are significantly worse, with a more than 10 point decrease in accuracy and a large increase in content effect. Our manual analysis reveals that 18\% formulas contain syntax errors, most of them caused by including invalid LaTeX commands in the Prover9 syntax, or adding extra parentheses.\footnote{In fact, syntax errors are even more prevalent in the raw outputs for this setup. However, we apply a regex-based cleanup (e.g. removing characters such as ``;'') for a fair comparison with our main setup.}

\paragraph{Single-step parsing} We compare the multi-step FOL formalization used in our main setup with a single-step formalization, where the entire syllogism is translated into FOL in a single inference call and generated in JSON format (see Figure~\ref{fig:prompt_single_step} in the Appendix). As the results show, both approaches lead to comparable performance.

\paragraph{LLM as a prover} This ablation tests the extent to which the LLM is able to replace the prover. Given the premises and the conclusion translated to FOL formulas, the LLM is instructed to decide if the conclusion is valid or not. The prompt template is shown in Figure~\ref{fig:prompt_llm_prover} in the Appendix. The results show a significant decrease of more than 15 points in accuracy, suggesting that LLMs of this size perform well in logical translation but not in reasoning, especially given the content effects.

\paragraph{LLM-based premise retrieval} The symbolic approach to relevant premise identification used in subtasks 2 and 4 (see Section \ref{sec:system_description}) is compared with a simple LLM-based approach, where a model is instructed to identify all relevant premises in formalized syllogisms. The results show a significantly worse F1 score compared to the symbolic approach.

\paragraph{Direct multilingual parsing} This ablation removes the pre-translation module and applies the FOL parser directly to the original untranslated syllogisms. The results show a substantial decrease in accuracy, which could be explained by the limited multilingual capabilities of models in this size range.

\paragraph{MT models} We compare different LLMs applied to the pre-translation step. In addition to Gemma3 27B used in our submission, we test three smaller models: Gemma3 4B, Qwen3 4B Instruct, and Tiny Aya Global \cite{salamanca_tiny_2026}, a multilingual LLM with 3.35B parameters. The results show that pre-translating with LLMs in this size range leads to even worse results than skipping the pre-translation step.

\paragraph{Model sizes and variants} We compare two model sizes of the Qwen3 family: 4B and 30B-A3B (MoE with 3B active parameters) as well as the Instruct and Thinking variants of both LLMs. Interestingly, our results in Table~\ref{tab:model_ablation} in the Appendix show that the 4B model achieves significantly higher accuracy than the larger variant, while the Instruct and Thinking variants of the 4B category achieve comparable accuracy.

\section{Error Analysis}
\label{sec:error_analysis}

We analyze the errors of our system on subtask 1, both in the validation set and in the test set released after the competition. Our system made 21 and 10 errors in the validation set and test set, respectively. However, we find that most of them are caused by label errors in the dataset or specifics of the Aristotelian logic \cite{sep-aristotle-logic}, while only two and four (validation + test) are clear errors. In the following, we describe the identified error categories in more detail.

\paragraph{Label errors}
Ten of the 21 validation errors were caused by logically incorrect ground truth labels.
For example, the following example was marked in the dataset as invalid, despite the conclusion following from the premises by simple transitivity:
\begin{samplebox}
P$_1$: Anything which is an animal is a flightless thing. \\
P$_2$: Every single bird is an animal. \\
C: There are no birds that are not flightless.
\end{samplebox}
The organizers seem to have revised these types of mistakes in the test set, as we did not observe this type of error there. 
However, the following inconsistencies remained.

\paragraph{Ambiguities between logic types}
We assume the existential promises of Aristotelian logic, as confirmed by the organizers of the shared task.
However, aside from the existential import (see Appendix~\ref{sec:aristotelian_logic}), we do not implement any specifics of Aristotelian logic to keep the system more general. We explored Aristotle's procedure for assessing validity with figures and moods using LLMs \cite[Chapter 5.4]{sep-aristotle-logic}, but it did not meaningfully surpass our main FOL system.

Three samples were not valid syllogisms, as they did not follow the exact structure (Appendix \ref{sec:aristotelian_logic}), although they were logically valid in FOL. 
For example, ``Birds'' are mentioned in all premises and in the conclusion, making it an invalid syllogism:
\begin{samplebox}
P$_1$: There are no birds that can be called fish.\\
P$_2$: It is also true that every bird is a type of animal. \\
C: This has led to the conclusion that a portion of animals are birds.
\end{samplebox}
Another one of the samples was invalid as a syllogism, because it used ``part of'' as a predicate:
\begin{samplebox}
P$_1$: Every single ocean is a body of water. \\
P$_2$: Any area that is a sea is part of an ocean. \\
C: This means that all seas are bodies of water.
\end{samplebox}

The remaining 6 of the 9 ambiguity-based errors involved more complex formulations which were parsed directly into FOL instead of being classified into one of four the categories of Aristotelian moods (Appendix \ref{sec:aristotelian_logic}). Five of these errors were made in syllogisms where 
a premise starts with ``It is not true that any:''
\begin{samplebox}
P$_1$: It is not true that any flower is a plant. \\
P$_2$: All things that are roses are flowers. \\
C: Thus, some roses are not plants.
\end{samplebox}
We understand it as ``Not for all,'' or ``There are some X for which it does not hold that...''.
However, in Aristotelian syllogisms this would be interpreted as ``For all X it does not hold that...'' (universal negative).
Similarly, if "something" is used, it should be interpreted as particular (``Some living organisms are mammals'') and not universal (``every'') as in FOL:
\begin{samplebox}
P$_1$: Something that is a living organism is a mammal. \\
P$_2$: The entire set of people is composed of living organisms. \\
C: It is the case that all people are mammals.
\end{samplebox}

In the test set, 6 of the 10 errors were based on logic type ambiguities.

\paragraph{Clear errors} Overall, we consider only 2 of the 21 errors on the validation set and 4 of the 10 errors on the test set to be clear errors caused by our method.
These errors were caused by Qwen 3 producing an incorrect FOL parse. For example, in the following sample, the conclusion was parsed into
$\exists x (\text{electronicdevice}(x) \land \text{noncomputer}(x))$, when the last predicate should have been $\lnot\text{computer}(x)$ to be compatible with the previous premises:
\begin{samplebox}
P$_1$: A computer is never a tablet. \\
P$_2$: All of the tablets are, without exception, electronic devices. \\
C: A subset of electronic devices is not composed of computers.
\end{samplebox}

\section{Analysis of the Combined Score}

We observe a large CS drop in competition submissions with near-perfect accuracy and a large CS variance, which prompted us to analyze the reliability of the CS metric.
Our analysis below shows that the CS metric over-amplifies random artifacts stemming from isolated errors.

\paragraph{Empirical analysis}
To assess the statistical significance of the content effects measured in the competition, we compare the competition results to those which would result from an unbiased model: one which classifies each of the four groups with the same accuracy $a \in [0, 1]$.
While such a model should not exhibit any content effect, as the difference of different group accuracies should be zero, we will show that some will be observed nonetheless.
To that end, we simulate predictions for the subtask 1 test set by choosing a model accuracy $a \sim U(0.5, 1)$ and assigning each sample a correct label with probability $a$.
We plot the measured scores of these simulations in Figure \ref{fig:empirical_analysis}. 
We see that the content effect of the simulated unbiased model not only shows a high variance, but is generally non-zero, with the exception of a perfect model.
Since the measured accuracies for each sample group are scaled binomial variables, the high variance in the measured content effects can be simply explained by the small size of the test set (191 samples, roughly 48 per group).
We can also see that most subtask 1 submissions match the simulated unbiased results -- only four of the submissions show a content effect significantly different from an unbiased model.

\begin{figure}[t]
    \centering
    \includegraphics[width=\linewidth]{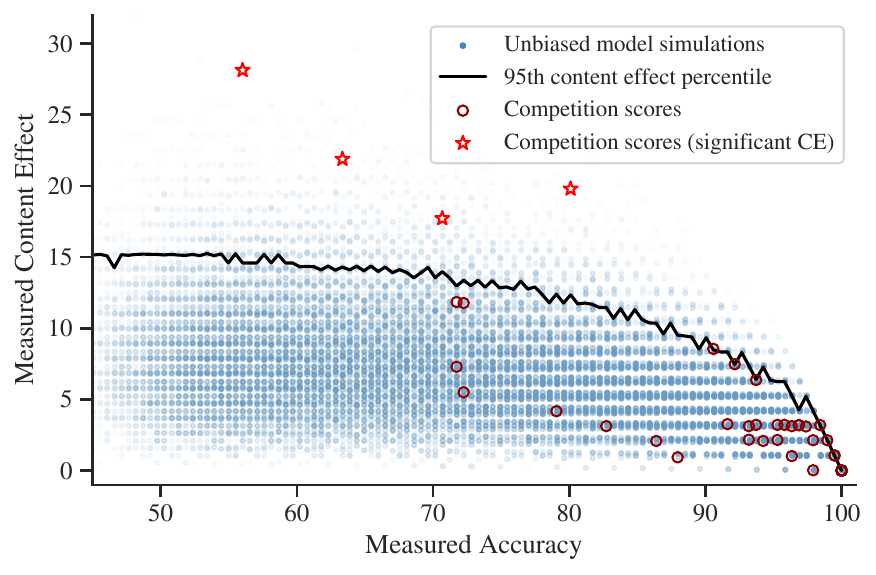}
    \caption{We simulate subtask 1 results of an unbiased model without content effect (\textcolor{plotblue}{blue}).
    We compare these runs against the competition submissions on the same test set (\textcolor{plotred}{red}, \textcolor{plotdarkred}{dark red}).
    By plotting the empirical 95th percentile of the unbiased model's measured content effect (\textbf{black}), we show that only four submissions exhibit a significant content effect (\textcolor{plotred}{red star}).
    }
    \label{fig:empirical_analysis}
\end{figure}

\begin{figure}[t]
    \centering
    \includegraphics[width=\linewidth]{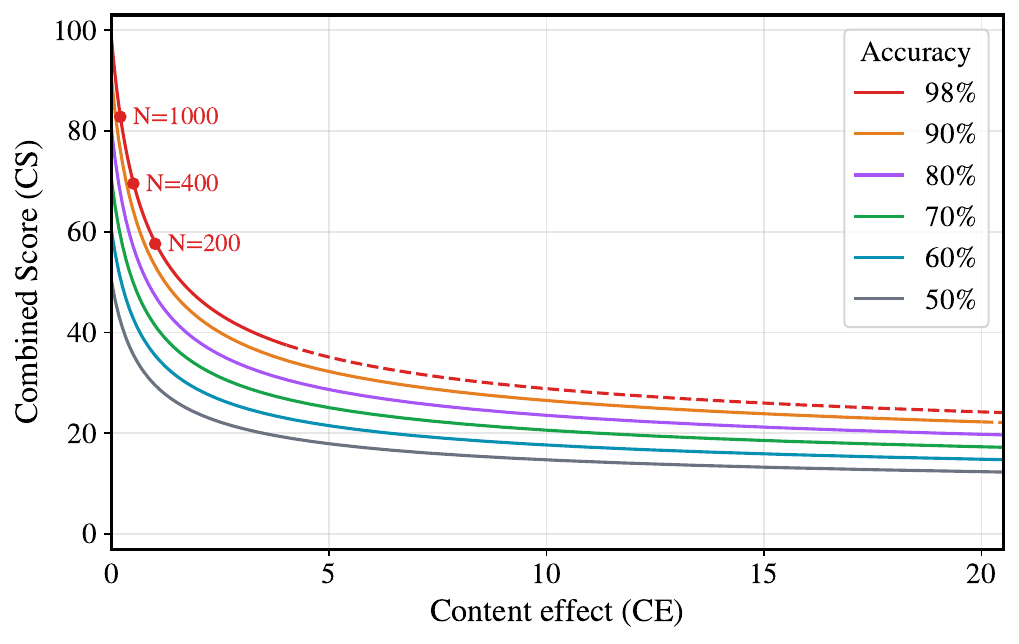}
    \caption{Sensitivity of the combined score (CS) metric with respect to changes in content effect (CE), showing the following properties: (1) CS is most sensitive when CE values are small; (2) systems with higher accuracy are penalized more; (3) flipping a single correct prediction can have a dramatic effect on CS.}
    \label{fig:score_vs_content_effect}
\end{figure}

\paragraph{Theoretical analysis}

We derive the expected content effect for an unbiased model, which is non-zero even when the group accuracies are equal:
\begin{proposition}\label{prop:avg_ce}
    For an unbiased model with accuracy $a \in [0,1]$, the expected measured content effect on a subtask 1 dataset with $N$ samples in each group is 
    $\mathbb{E}[\text{CE} | a] \approx 200 \sqrt{\frac{a(1-a)}{\pi N}}$.
\end{proposition}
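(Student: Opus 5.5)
We plan to write the content effect explicitly in terms of the four group accuracies and reduce everything to the expected absolute value of a difference of two independent, approximately normal sample proportions.

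\emph{Setup.} Let $\hat A_{VP}, \hat A_{VI}, \hat A_{IP}, \hat A_{II}$ be the measured accuracies (in percent) on the valid/invalid $\times$ plausible/implausible groups, each of size $N$. We read the task's definition of CE \citep{valentino-etal-2026-semeval} as follows. The validity effect is the average of the two plausibility-balanced differences $|\hat A_{VP}-\hat A_{IP}|$ and $|\hat A_{VI}-\hat A_{II}|$. The plausibility effect is defined analogously from $|\hat A_{VP}-\hat A_{VI}|$ and $|\hat A_{IP}-\hat A_{II}|$. CE is the mean of these two effects. Hence CE is an average of four terms, each of the form $|\hat A_g-\hat A_h|$ for two distinct groups $g,h$. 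By linearity of expectation, $\mathbb{E}[\text{CE}\mid a]$ equals the common expectation of any single such term, so no joint or covariance structure between the four terms is needed. This reading of the official definition is the step we are least sure of, and the claimed constant depends on it.

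\emph{Single term.} For an unbiased model, each sample is correct independently with probability $a$. Therefore $\hat A_g = \frac{100}{N} B_g$ with $B_g \sim \mathrm{Bin}(N,a)$, and the $B_g$ are independent across groups. The difference $D=\hat A_g-\hat A_h$ then has mean $0$ and variance
\[
\sigma^2 = 2\cdot 100^2\, a(1-a)/N .
\]

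\emph{Normal approximation.} By the central limit theorem (de Moivre--Laplace), $D$ is approximately $\mathcal N(0,\sigma^2)$. For a centred normal variable, $\mathbb{E}|D| = \sigma\sqrt{2/\pi}$. Substituting $\sigma$ gives
\[
\mathbb{E}|D| \approx 100\sqrt{\tfrac{2}{\pi}}\sqrt{\tfrac{2a(1-a)}{N}} = 200\sqrt{\tfrac{a(1-a)}{\pi N}},
\]
which is the claimed value.

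\emph{Main obstacle.} Apart from the interpretation issue, the main difficulty is that the approximation step is only asymptotic. The binomial difference is lattice-valued, and when $a$ is near $0$ or $1$ (so $Na(1-a)$ is small) the normal approximation of $\mathbb{E}|D|$ is poor. In particular, at $a=1$ the formula gives exactly $0$, which matches the perfect-model case seen in the simulations. We would state the result as holding to leading order in $N$, which is consistent with the "$\approx$" in the proposition. If a rigorous error bound were wanted, we would try a Berry--Esseen bound applied to $\mathbb{E}|D|$ via the tail-integral identity $\mathbb{E}|D|=\int_0^\infty P(|D|>t)\,dt$.
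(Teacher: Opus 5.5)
Your proposal is correct and follows essentially the same route as the paper. The paper writes CE as an average of four absolute differences between independent binomial group accuracies, and your pairings match its appendix definition exactly, so the interpretation you were unsure of is the intended one; it then approximates each difference by a centred normal with variance $2\cdot 100^2 a(1-a)/N$, takes the folded-normal mean $\sigma\sqrt{2/\pi} = 200\sqrt{a(1-a)/(\pi N)}$, and carries this to CE by linearity of expectation, and your caveat about the Gaussian approximation failing near $a=1$ matches the paper's own remark.
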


See Appendix~\ref{app:score} for a proof of Proposition \ref{prop:avg_ce}.
From our experiments, this estimate is already tight for the size of the subtask 1 test set.
We should expect a content effect of $\approx 2.3$ for a $ 98\%$-accurate unbiased model, which already lowers its CS to less than half that of a perfect model.
Furthermore, the derivative of the $\mathbb{E}[\text{CE} | a]$ approximation approaches $-\infty$ as accuracy approaches $100\%$. 
While this is not accurate for the precise $\mathbb{E}[\text{CE} | a]$ as the Gaussian approximation used in Appendix~\ref{app:score} is not appropriate for values of $a$ close to~1, it nonetheless provides a reason for the sharp CS drop in task submissions with slightly lower than perfect accuracy.

Figure~\ref{fig:score_vs_content_effect} illustrates the sensitivity of the combined score with respect to the content effect for simulated systems with different accuracies. To demonstrate the concrete impact of this sensitivity, we simulate a high-accuracy system with no content effect and flip a single correct prediction. Even with a sample size $N=1000$ -- substantially larger than the competition's test set -- the combined score drops by more than 15 points.

\section{Conclusion}
\label{sec:conclusion}

This paper presents our system submitted to SemEval-2026 Task 11: Disentangling Content and Formal Reasoning in Large Language Models. We find that while small LLMs (Qwen 3 4B) demonstrate limited capabilities in syllogistic reasoning and high content effect, these models can be useful when integrated with symbolic tools, such as a first-order logic prover. Our experiments show the advantage of parsing syllogisms to LaTeX formulas, a format well-represented in LLMs' training data, rather than using arbitrary parser syntax. We also find that LLMs of this size are not sufficient for multilingual reasoning. Finally, we conduct an analysis of the main ranking metric and show that its sensitivity, combined with a small sample size, can overly penalize systems with high accuracy and low content effect.

\section*{Limitations}
\label{sec:limitations}

In our experiments, we did not evaluate the impact of fine-tuning LLMs. 
As our approach is a pipeline, any errors that occur in earlier stages will propagate through to later stages.
The Aristotelian syllogism has a specific structure which may complicate the formal mapping to FOL logic. 
It is unclear whether the parsing of natural language into FOL creates genuine reasoning with limited expressive power of syllogisms.
The generality of the presented approach and its evaluation may be limited: the output of small models may be sensitive to prompt variations, and the data and metrics may not reliably distinguish genuine content effect from statistical noise.

\section*{Acknowledgments}

This work was was funded by the European Union (ERC, NG-NLG, 101039303). It was additionally supported by the National Recovery Plan funded project MPO 60273/24/21300/21000 CEDMO 2.0 NPO, Charles University Research Centre program
No. 24/SSH/009, the project Human-centred AI for a Sustainable and Adaptive Society (reg. no.: CZ.02.01.01/00/23\_025/0008691), co-funded by the European Union, and Charles University SVV project number 260 821. It used resources of the LINDAT/CLARIAH-CZ Research Infrastructure (Czech Ministry of Education, Youth, and Sports project No. LM2018101).

%\bibliography{anthology,custom}
% Custom bibliography entries only
\bibliography{custom}

\appendix

\section{Expected Content Effect Analysis}
\label{app:score}

In this section, we analyze the expected value of the measured content effect for an unbiased model.
Let $A_g \in [0, 100]$ denote the classification accuracy for each group $g$.
The content effect score ${\text{CE}}$ is defined as:
\begin{align}
    C_\text{intra} &= \frac{|A_{\text{v},\text{p}} - A_{\text{v},\lnot\text{p}}| + |A_{\lnot\text{v},\text{p}} - A_{\lnot\text{v},\lnot\text{p}}|}{2}\\
    C_\text{inter} &= \frac{|A_{\text{v},\text{p}} - A_{\lnot\text{v},\text{p}}| + |A_{\text{v},\lnot\text{p}} - A_{\lnot\text{v},\lnot\text{p}}|}{2}\\
    {\text{CE}} &= \frac{1}{2}(C_\text{intra} + C_\text{inter}) \label{eq:content_effect}
\end{align}

Now, we focus on Proposition \ref{prop:avg_ce}, which states that
for an unbiased model with accuracy $a \in [0,1]$, the expected measured content effect on a sub-task 1 dataset with $N$ samples in each group is:
\begin{equation}
    \mathbb{E}[\text{CE} | a] \approx 200 \sqrt{\frac{a(1-a)}{\pi N}}
\end{equation}

\begin{proof}
    The sub-task 1 datasets consist of 4 groups, on which the content effect is measured as shown in Equation \ref{eq:content_effect}.
    For each group $g$, a correct classification can be modeled as a Bernoulli random variable, so the count of correct classifications is distributed binomially: $C_g \sim B(N, a)$.
    With enough samples, we can model the measured accuracy $A_g$ for that group as normally distributed:
    \begin{equation}
        A_g = \frac{100C_g}{N} \overset{\mathcal{L}}{\longrightarrow}\mathcal{N}\left(100a, \frac{100^2a(1-a)}{N}\right)
    \end{equation}
    Since the difference of the accuracies of the two groups would be roughly normally distributed as well, now with zero mean and doubled variance, the absolute value of that difference is distributed by the folded normal distribution.
    The mean of a folded normal distribution $\mathcal{F}$ is known: 
    \begin{equation}
        \mu_{\mathcal{F}} = \sigma\sqrt{\frac{2}{\pi}}e^{-\frac{\mu}{2\sigma^2}} + \mu\left(1 - 2\Phi\left(-\frac{\mu}{\sigma}\right)\right)
    \end{equation}
    In our case $\mu=0$, and $\sigma^2=2\frac{100^2a(1-a)}{N}$,
    so the expected value of the absolute difference of accuracies in different groups amounts to:
    \begin{equation}
        \mathbb{E}[|A_{g_1} - A_{g_2}|] \approx 200 \sqrt{\frac{a(1-a)}{\pi N}}
    \end{equation}
    Averaging the four absolute differences to obtain $\text{CE}$ as in Equation \ref{eq:content_effect} leaves the expected value unchanged.
\end{proof}

\section{Assumptions of Aristotelian logic}
\label{sec:aristotelian_logic}

As the shared task is based on Aristotelian logic, it assumes existential import: a universal proposition implies the corresponding particular proposition \cite[Chapter 5.2]{sep-aristotle-logic}. For example, ``all A are B'' implies ``some A is B''.

However, existential import is not an assumption of formal logic provers like Prover9, which are based on Boolean logic. We address this by synthesizing the required particular propositions through a regex-based extraction of the necessary predicates from the existing propositions.

\paragraph{Valid sylogisms}
For a syllogism to be in valid form, it must include exactly two premises and a conclusion. Each premise contains a middle term that is not in conclusion. The other term in a premise is a predicate of a conclusion (major premise) or a subject of a conclusion (minor premise) \cite[Chapter 5]{sep-aristotle-logic}. 
Subjects can be individual (e.g. Socrates) or universal (e.g. human) and predicates only universal \cite[Chapter 4]{sep-aristotle-logic}. 

Sentences can take one of four forms for universal subjects: universal affirmative (``Every S is P''), universal negative (``No S is P''), particular affirmative (``Some S is P'') or particular negative (``Some S is not P''), and two additional forms for individual subjects: ``S is P'' and ``S is not P'' \cite[Chapter 4.3]{sep-aristotle-logic}.

\begin{table*}[t]
\centering
\small
\begin{tabular}{ll}
\toprule
Name & Ollama tag \\
\midrule
Qwen3 4B thinking & \texttt{qwen3:4b-thinking-2507-fp16} \\
Qwen3 4B instruct & \texttt{qwen3:4b-instruct-2507-fp16} \\
Qwen3 30B-A3B thinking & \texttt{qwen3:30b-a3b-thinking-2507-q8\_0} \\
Qwen3 30B-A3B instruct & \texttt{qwen3:30b-a3b-instruct-2507-q8\_0} \\
Tiny Aya Global 3.35B & \texttt{hf.co/CohereLabs/tiny-aya-global-GGUF:BF16} \\
Gemma 3 27B & \texttt{gemma3:27b-it-fp16} \\
Gemma 3 4B & \texttt{gemma3:4b-it-fp16} \\
\bottomrule
\end{tabular}
\caption{List of all LLMs used in our experiments.}
\label{tab:models}
\end{table*}

\section{Validation Data}
\label{sec:appendix_data}

In this section, we provide details on the construction of our validation data for all four subtasks.

\paragraph{Subtask 1} We use the first 500 training set examples as our validation data for this subtask. This subset also serves as a basis for constructing the validation sets for the other three subtasks.

\paragraph{Subtask 2} We draw irrelevant premises from a pool of the remaining 460 training examples. For each validation example, we select 3--5 irrelevant premises from topically related pool examples: these are identified by shared nouns that appear in at least two propositions of a syllogism, extracted through POS tagging. This selection respects plausibility labels, drawing irrelevant premises only from plausible pool examples for plausible validation examples, while premises for implausible examples may come from any pool example. The selected irrelevant premises are combined with the original premises, de-duplicated, and shuffled.

\paragraph{Subtask 3} The multilingual data for subtask 3 are synthesized by translating the validation set to the languages included in the shared task: English (en), German (de), Spanish (es), French (fr), Italian (it), Dutch (nl), Portuguese (pt), Russian (ru), Chinese (zh), Swahili (sw), Bengali (bn), Telugu (te). For each validation example, we randomly sample a target language and instruct Claude Opus 4.6 to translate the syllogism to the language.

\paragraph{Subtask 4} Since this subtasks includes both multilingual inputs and irrelevant premises, we combine the approaches for subtasks 2 and 3 to synthesize the corresponding validation set. Specifically, we first sample irrelevant premises and then translate this augmented dataset to the target languages.

\section{Models and Parameters}
\label{sec:appendix_models}

We run all our experiments with greedy decoding (the temperature parameter is set to 0). The only exception are retries for the FOL parser, which are triggered by unparseable outputs. In these cases, we repeat the response generation with a temperature of 0.6. The context size is set to 16384 tokens.
We use the Ollama\footnote{\url{https://ollama.com/}} platform for inference. Table~\ref{tab:models} presents all models used in our experiments and the corresponding Ollama tags.

\section{Prompt Templates}
\label{sec:appendix}

\subsection{Submitted System}

Prompt templates for the FOL parser (see Section~\ref{sec:system_description}) used in our main setup are presented in Figures~\ref{fig:prompt_default} and \ref{fig:prompt_initial}. Figures~\ref{fig:prompt_translation}--\ref{fig:prompt_translation_feedback} show the prompt templates for the translation, self-evaluation, and correction steps of the Translator component.

\subsection{Ablations}

Figures~\ref{fig:prompt_end_to_end}--\ref{fig:prompt_llm_retrieval} show prompt templates for all ablation experiments described in Section~\ref{sec:ablations}.

\section{Detailed Results}
\label{sec:appendix_results}

\subsection{Main Results}
\label{sec:detailed_results_main}

The results on both the validation and the test set with 95\% confidence intervals are presented in Table~\ref{tab:results_full}.

\begin{table*}[ht]
\centering
\small
\begin{tabular}{llcccc}
\toprule
Split & Task & Accuracy $\uparrow$ & F1 Premises $\uparrow$ & Content Effect $\downarrow$ & Combined Score $\uparrow$ \\
\midrule
Validation & 1 & 95.83 [94.20, 97.40]   & -- & 2.28 [0.70, 4.38]   & 45.61 [35.35, 63.04] \\
           & 2 & 91.17 [88.60, 93.60] & 87.77 [83.97, 91.33] & 5.17 [2.34, 7.98]   & 32.42 [27.76, 40.52] \\
           & 3 & 90.65 [88.20, 93.21] & -- & 5.80 [2.43, 9.56]   & 31.84 [26.52, 40.85] \\
           & 4 & 85.85 [82.80, 88.80] & 78.14 [73.25, 82.74] & 3.91 [1.21, 7.34]   & 33.02 [26.09, 45.46] \\
\midrule
Test & 1 & 94.77 [91.62, 97.38] & -- & 5.81 [2.50, 10.87] & 33.45 [26.91, 42.99] \\
     & 2 & 97.35 [94.74, 99.47] & 96.91 [93.18, 100.00] & 3.93 [1.04, 8.17] & 39.98 [29.33, 57.84] \\
     & 3 & 93.71 [90.10, 96.88] & -- & 7.17 [3.33, 11.76] & 30.84 [25.44, 39.28] \\
     & 4 & 84.75 [79.67, 90.10] & 83.36 [75.98, 90.46] & 6.03 [1.75, 12.02] & 29.85 [22.98, 42.09] \\
\bottomrule
\end{tabular}
\caption{Detailed results on validation and test sets with 95\% confidence intervals obtained with bootstrap resampling.}
\label{tab:results_full}
\end{table*}

\subsection{Ablations}
\label{sec:detailed_results_ablations}

\begin{table*}[ht]
\centering
\small
\begin{tabular}{llcccc}
\toprule
Task\hspace{-2mm} & Setup & Accuracy $\uparrow$ & F1 Premises $\uparrow$ & Content Effect $\downarrow$ & Combined Score $\uparrow$ \\
\midrule
1 & Full                        & \textbf{95.83} [94.20, 97.40]   & -- & 2.28 [0.70, 4.38]   & 45.61 [35.35, 63.04] \\
  & - single-step & 95.81 [94.00, 97.40] & -- & \textbf{2.14} [0.57, 4.24]   & \textbf{46.68} [35.84, 66.39] \\
  & - Prover9 format            & 85.97 [82.86, 88.72] & -- & 12.23 [8.99, 16.08] & 24.11 [21.77, 26.78] \\
  & - LLM prover         & 79.22 [75.40, 82.80] & -- & 20.21 [16.49, 23.87] & 19.58 [18.04, 21.27] \\
  & End-to-end                  & 84.17 [80.80, 87.40] & -- & 14.67 [11.31, 18.31] & 11.25 [10.31, 12.35] \\
\midrule
2 & Full                        & 91.17 [88.60, 93.60] & 87.77 [83.97, 91.33] & 5.17 [2.34, 7.98] & \textbf{32.42} [27.76, 40.52] \\
  & - single-step & \textbf{91.76} [89.40, 94.20] & \textbf{89.68} [86.19, 92.82] & 5.60 [2.84, 8.48] & 31.93 [27.75, 39.12] \\
  & - LLM retrieval       & 90.40 [87.80, 93.00] & 72.68 [67.01, 77.89] & \textbf{4.76} [1.90, 7.89] & 30.43 [25.31, 39.65] \\
  & End-to-end                  & 83.40 [80.00, 86.60] & 64.11 [58.40, 69.83] & 14.92 [11.45, 18.80] & 19.65 [17.54, 22.02] \\
\midrule
3 & Full                 & \textbf{94.43} [92.40, 96.20] & -- & \textbf{2.53} [0.67, 5.26] & \textbf{43.85} [32.99, 63.61] \\
  & - MT = Tiny Aya Global \hspace{-3mm} & 67.36 [63.40, 71.21] & -- & 23.62 [18.90, 28.27] & 8.03 [7.35, 8.75] \\
  & - MT = Gemma3 4B & 87.18 [84.00, 90.01] & -- & 5.14 [1.88, 8.72] & 15.96 [13.17, 21.34] \\
  & - MT = Qwen3 4B  & 79.75 [76.35, 83.17] & -- & 7.09 [2.31, 12.93] & 13.41 [10.84, 17.97] \\
  & - MT = $\varnothing$  & 90.65 [88.20, 93.21] & -- & 5.80 [2.43, 9.56] & 31.84 [26.52, 40.85] \\
  & End-to-end           & 83.01 [79.60, 86.20] & -- & 14.88 [11.24, 18.81] & 11.06 [10.08, 12.19] \\
\midrule
4 & Full                 & \textbf{90.67} [88.20, 93.00] & \textbf{85.10} [80.88, 89.01] & 4.83 [2.02, 8.46] & 32.65 [27.07, 41.93] \\
  & - MT = $\varnothing$  & 85.85 [82.80, 88.80] & 78.14 [73.25, 82.74] & \textbf{3.91} [1.21, 7.34] & \textbf{33.02} [26.09, 45.46] \\
  & End-to-end           & 81.39 [78.00, 84.80] & 60.72 [54.92, 66.33] & 17.83 [13.75, 22.04] & 18.11 [16.25, 20.24] \\
\bottomrule
\end{tabular}
\caption{Detailed results of the component ablations with including 95\% confidence intervals obtained with bootstrap resampling.}
\label{tab:pipeline_ablation_full}
\end{table*}

Table~\ref{tab:model_ablation} shows the comparison of the instruct and thinking variants, as well as different model sizes.

\begin{table*}[ht]
\centering
\small
\begin{tabular}{llcccc}
\toprule
Task & Model & Accuracy $\uparrow$ & F1 Premises $\uparrow$ & Content Effect $\downarrow$ & Combined Score $\uparrow$ \\
\midrule
1 & Qwen3 4B Thinking  & \textbf{95.83} [94.2, 97.4]   & -- & 2.28 [0.70, 4.38]   & \textbf{45.61} [35.35, 63.04] \\
  & Qwen3 4B Instruct  & 94.83 [92.6, 96.8]   & -- & \textbf{2.27} [0.62, 4.80]   & 45.60 [34.16, 64.35] \\
  & Qwen3 30B Thinking & 91.37 [88.8, 93.8]   & -- & 4.25 [1.56, 7.20]   & 35.51 [28.92, 46.74] \\
  & Qwen3 30B Instruct & 77.79 [74.0, 81.4]   & -- & 21.16 [17.40, 25.41] & 19.02 [17.55, 20.58] \\
\midrule
2 & Qwen3 4B Thinking  & \textbf{91.17} [88.60, 93.60] & \textbf{87.77} [83.97, 91.33] & 5.17 [2.34, 7.98]   & 32.42 [27.76, 40.52] \\
  & Qwen3 4B Instruct  & 90.65 [88.00, 93.20] & 86.37 [82.59, 90.23] & \textbf{4.79} [2.09, 8.19]   & \textbf{32.92} [27.45, 41.89] \\
\midrule
3 & Qwen3 4B Thinking  & \textbf{90.65} [88.20, 93.21] & -- & \textbf{5.80} [2.43, 9.56]   & \textbf{31.84} [26.52, 40.85] \\
  & Qwen3 4B Instruct  & 89.02 [86.40, 91.60] & -- & 6.39 [2.75, 10.65]  & 30.42 [25.37, 38.38] \\
\midrule
4 & Qwen3 4B Thinking  & \textbf{85.85} [82.80, 88.80] & \textbf{78.14} [73.25, 82.74] & \textbf{3.91} [1.21, 7.34]   & \textbf{33.02} [26.09, 45.46] \\
  & Qwen3 4B Instruct  & 84.13 [80.80, 87.20] & 72.92 [68.10, 77.92] & 4.45 [1.34, 8.24]   & 30.28 [24.10, 41.77] \\
\bottomrule
\end{tabular}
\caption{Comparison of model size and variants with 95\% confidence intervals obtained with bootstrap resampling.}
\label{tab:model_ablation}
\end{table*}

\begin{figure*}[h]
\small
\begin{tcolorbox}[
    colback=white,
    colframe=bordercolor,
    arc=3mm,
    boxrule=0.75pt,
    width=0.95\textwidth,
    left=5pt,
    right=5pt,
    top=5pt,
    bottom=5pt,
]
\ttfamily
\color{textgray}
\setlength{\parindent}{0pt}
\setlength{\parfillskip}{0pt plus 1fil}
\emergencystretch=3em
\sloppy
\raggedright

You are given a proposition in natural language. Your task is to convert the proposition to first-order logic. For logical operators, use latex symbols: \textbackslash forall, \textbackslash exists, \textbackslash land, \textbackslash lor, \textbackslash neg, \textbackslash rightarrow. If possible, parse the proposition to a formula consisting of two atomic formulas, each of them a unary predicate. Each unique predicate should be represented by a lowercase (or camel-case) word. For a context, you are also given previous propositions already translated to first-order logic. Make sure you use predicate mapping from previous responses. Generate the final formula in the boxed format.

\vspace{5pt}

Previous propositions:

\vspace{2pt}

\{previous\_propositions\}

\vspace{5pt}

Proposition: \{proposition\}

\end{tcolorbox}
\caption{Prompt template for the FOL parser. For each natural language proposition, the LLM is instructed to translate it to FOL representation, and is given the mapping from natural language to FOL representation for all previously translated propositions in the syllogism.}
\label{fig:prompt_default}
\end{figure*}

\begin{figure*}[h]
\small
\begin{tcolorbox}[
    colback=white,
    colframe=bordercolor,
    arc=3mm,
    boxrule=0.75pt,
    width=0.95\textwidth,
    left=5pt,
    right=5pt,
    top=5pt,
    bottom=5pt,
]
\ttfamily
\color{textgray}
\setlength{\parindent}{0pt}
\setlength{\parfillskip}{0pt plus 1fil}
\emergencystretch=3em
\sloppy
\raggedright

You are given a proposition in natural language. Your task is to convert the proposition to first-order logic. For logical operators, use latex symbols: \textbackslash forall, \textbackslash exists, \textbackslash land, \textbackslash lor, \textbackslash neg, \textbackslash rightarrow. If possible, parse the proposition to a formula consisting of two atomic formulas, each of them a unary predicate. Each unique predicate should be represented by a lowercase (or camel-case) word. Generate the final formula in the boxed format.

\vspace{5pt}

Proposition: \{proposition\}

\end{tcolorbox}
\caption{Prompt template for the FOL parser used to parse the initial proposition of a syllogism.}
\label{fig:prompt_initial}
\end{figure*}

\begin{figure*}[h]
\small
\begin{tcolorbox}[
    colback=white,
    colframe=bordercolor,
    arc=3mm,
    boxrule=0.75pt,
    width=0.95\textwidth,
    left=5pt,
    right=5pt,
    top=5pt,
    bottom=5pt,
]
\ttfamily
\color{textgray}
\setlength{\parindent}{0pt}
\setlength{\parfillskip}{0pt plus 1fil}
\emergencystretch=3em
\sloppy
\raggedright

Translate the following syllogism to English. Translated syllogism will be used as an input to a FOL parser. Make sure that your translation is unambiguous and easy to parse and understand. Some propositions may be nonsensical, but you should still preserve their meaning. Output ONLY the translation, nothing else.

\vspace{5pt}

Text to translate:

\vspace{2pt}

\{syllogism\}

\vspace{5pt}

English translation:

\end{tcolorbox}
\caption{Prompt template for translation used in subtasks 3 and 4.}
\label{fig:prompt_translation}
\end{figure*}

\begin{figure*}[h]
\small
\begin{tcolorbox}[
    colback=white,
    colframe=bordercolor,
    arc=3mm,
    boxrule=0.75pt,
    width=0.95\textwidth,
    left=5pt,
    right=5pt,
    top=5pt,
    bottom=5pt,
]
\ttfamily
\color{textgray}
\setlength{\parindent}{0pt}
\setlength{\parfillskip}{0pt plus 1fil}
\emergencystretch=3em
\sloppy
\raggedright

You are a translation quality evaluator. Your task is to verify if the following translation of a syllogism is correct.

\vspace{5pt}

Original text:

\vspace{2pt}

\{formatted\_original\}

\vspace{5pt}

Translation:

\vspace{2pt}

\{translation\}

\vspace{5pt}

Determine whether the translation preserves the meaning of each proposition, and whether there any mistranslations or omissions.

\vspace{5pt}

Provide your verdict as a JSON object with exactly these fields:

- ``feedback'': explanation of errors if incorrect, or confirmation that translation is correct

- ``correct'': true if the translation is acceptable, false otherwise

\vspace{5pt}

Your response MUST end with the JSON object on its own line, formatted as:

\{``feedback'': ``<your feedback>'', ``correct'': <true or false>\}

\end{tcolorbox}
\caption{Prompt template for translation self-evaluation.}
\label{fig:prompt_translation_evaluation}
\end{figure*}

\begin{figure*}[h]
\small
\begin{tcolorbox}[
    colback=white,
    colframe=bordercolor,
    arc=3mm,
    boxrule=0.75pt,
    width=0.95\textwidth,
    left=5pt,
    right=5pt,
    top=5pt,
    bottom=5pt,
]
\ttfamily
\color{textgray}
\setlength{\parindent}{0pt}
\setlength{\parfillskip}{0pt plus 1fil}
\emergencystretch=3em
\sloppy
\raggedright

Translate the following syllogism to English. Translated syllogism will be used as an input to a FOL parser. Make sure that your translation is unambiguous and easy to parse and understand. Some propositions may be nonsensical, but you should still preserve their meaning. Output ONLY the translation, nothing else.

\vspace{5pt}

Text to translate:

\vspace{2pt}

\{syllogism\}

\vspace{5pt}

A previous translation attempt was incorrect. Here is the feedback:

\vspace{2pt}

\{feedback\}

\vspace{5pt}

Please provide a corrected translation.

\vspace{5pt}

English translation:

\end{tcolorbox}
\caption{Prompt template for translation with feedback provided by the self-evaluation step.}
\label{fig:prompt_translation_feedback}
\end{figure*}

\begin{figure*}[h]
\small
\begin{tcolorbox}[
    colback=white,
    colframe=bordercolor,
    arc=3mm,
    boxrule=0.75pt,
    width=0.95\textwidth,
    left=5pt,
    right=5pt,
    top=5pt,
    bottom=5pt,
]
\ttfamily
\color{textgray}
\setlength{\parindent}{0pt}
\setlength{\parfillskip}{0pt plus 1fil}
\emergencystretch=3em
\sloppy
\raggedright

You are a logic expert specializing in formal reasoning and categorical syllogisms. Assess the logical validity of a syllogism regardless of its real-world plausibility. Carefully examine the premises and the conclusion to determine whether the conclusion necessarily follows from the premises using formal logical structure. Identify any fallacies, such as undistributed middle, invalid contraposition, or missing conclusions. If the conclusion is logically entailed by the premises, output 'true'; otherwise, output 'false'.

\vspace{5pt}

Generate your output as a JSON object with the following fields.

\vspace{2pt}

\{ \\
  \hspace{6pt}"reasoning": "...", \\
  \hspace{6pt}"valid": "..."        \# note: the value you produce must be 'true' or 'false'" \\
\}

\vspace{5pt}

[[ \#\# syllogism \#\# ]]

\vspace{2pt}

\{syllogism\}

\vspace{5pt}

Respond with a JSON object in the following order of fields: 'reasoning', then 'valid' (must be formatted as a valid JSON bool).

\end{tcolorbox}
\caption{Prompt template for the zero-shot end-to-end baseline.}
\label{fig:prompt_end_to_end}
\end{figure*}

\begin{figure*}[h]
\small
\begin{tcolorbox}[
    colback=white,
    colframe=bordercolor,
    arc=3mm,
    boxrule=0.75pt,
    width=0.95\textwidth,
    left=5pt,
    right=5pt,
    top=5pt,
    bottom=5pt,
]
\ttfamily
\color{textgray}
\setlength{\parindent}{0pt}
\setlength{\parfillskip}{0pt plus 1fil}
\emergencystretch=3em
\sloppy
\raggedright

You are a logic expert specializing in formal reasoning and categorical syllogisms. Assess the logical validity of a syllogism regardless of its real-world plausibility. Carefully examine the premises and the conclusion to determine whether the conclusion necessarily follows from the premises using formal logical structure. Identify any fallacies, such as undistributed middle, invalid contraposition, or missing conclusions. If the conclusion is logically entailed by the premises, output 'true'; otherwise, output 'false'.

\vspace{5pt}

If the syllogism is valid, also identify which premises are relevant (i.e., necessary) for the conclusion to follow. List them as a 0-indexed array of premise indices.

\vspace{5pt}

Generate your output as a JSON object with the following fields.

\vspace{2pt}

\{ \\
  \hspace{6pt}"reasoning": "...", \\
  \hspace{6pt}"valid": "..."        \# note: the value you produce must be 'true' or 'false'" \\
  \hspace{6pt}"relevant\_premises": [0, 1, ...]  \# only if valid is 'true'; 0-indexed premise indices \\
\}

\vspace{5pt}

[[ \#\# syllogism \#\# ]]

\vspace{2pt}

\{syllogism\}

\vspace{5pt}

Respond with a JSON object in the following order of fields: 'reasoning', then 'valid' (must be formatted as a valid JSON bool), then 'relevant\_premises' (only if valid is 'true').

\end{tcolorbox}
\caption{Prompt template for the zero-shot end-to-end baseline with relevant premise identification.}
\label{fig:prompt_end_to_end_retrieval}
\end{figure*}

\begin{figure*}[h]
\small
\begin{tcolorbox}[
    colback=white,
    colframe=bordercolor,
    arc=3mm,
    boxrule=0.75pt,
    width=0.95\textwidth,
    left=5pt,
    right=5pt,
    top=5pt,
    bottom=5pt,
]
\ttfamily
\color{textgray}
\setlength{\parindent}{0pt}
\setlength{\parfillskip}{0pt plus 1fil}
\emergencystretch=3em
\sloppy
\raggedright

You are given a proposition in natural language. Your task is to convert the proposition to first-order logic. Express the statement in Prover9 syntax (an automated theorem prover library). Below you have a list of the most common logical operations, their symbol in Prover9 and an example of a formula.

\vspace{5pt}

\begin{tabular}{l l l}
Operation & Prover9 & Example \\
negation & - & (-p) \\
disjunction & | & (p | q | r) \\
conjunction & \& & (p \& q \& r) \\
implication & -> & (p -> q) \\
backward implication & <- & (p <- q) \\
equivalence & <-> & (p <-> q) \\
universal quantification & all & (all x all y P(x,y)) \\
existential quantification & exists & (exists x exists y P(x,y)) \\
combinations of the above & ... & (exists x (P(x)) \& exists x (P(x) \& Q(x))) \\
\end{tabular}

\vspace{5pt}

If possible, parse the proposition to a formula consisting of two atomic formulas, each of them a unary predicate. Each predicate should be represented by a single uppercase alphabetic symbol. Note that single-letter lowercase names: x y z u v w p q r must be individual variables in Prover9. So, ``p(x)'' is incorrect because ``p'' is an illegal predicate name, but ``P(x)'' is correct. For a context, you are also given previous propositions already translated to first-order logic. Make sure you use predicate mapping from previous responses. Generate the final formula in the boxed format.

\vspace{5pt}

Previous propositions:

\vspace{2pt}

\{previous\_propositions\}

\vspace{5pt}

Proposition: \{proposition\}

\end{tcolorbox}
\caption{Prompt template for the ablation where the FOL parser parses propositions directly to the Prover9 syntax.}
\label{fig:prompt_direct_parsing_default}
\end{figure*}

\begin{figure*}[h]
\small
\begin{tcolorbox}[
    colback=white,
    colframe=bordercolor,
    arc=3mm,
    boxrule=0.75pt,
    width=0.95\textwidth,
    left=5pt,
    right=5pt,
    top=5pt,
    bottom=5pt,
]
\ttfamily
\color{textgray}
\setlength{\parindent}{0pt}
\setlength{\parfillskip}{0pt plus 1fil}
\emergencystretch=3em
\sloppy
\raggedright

You are given a proposition in natural language. Your task is to convert the proposition to first-order logic. Express the statement in Prover9 syntax (an automated theorem prover library). Below you have a list of the most common logical operations, their symbol in Prover9 and an example of a formula.

\vspace{5pt}

\begin{tabular}{l l l}
Operation & Prover9 & Example \\
negation & - & (-p) \\
disjunction & | & (p | q | r) \\
conjunction & \& & (p \& q \& r) \\
implication & -> & (p -> q) \\
backward implication & <- & (p <- q) \\
equivalence & <-> & (p <-> q) \\
universal quantification & all & (all x all y P(x,y)) \\
existential quantification & exists & (exists x exists y P(x,y)) \\
combinations of the above & ... & (exists x (P(x)) \& exists x (P(x) \& Q(x))) \\
\end{tabular}

\vspace{5pt}

If possible, parse the proposition to a formula consisting of two atomic formulas, each of them a unary predicate. Each predicate should be represented by a single uppercase alphabetic symbol. Note that single-letter lowercase names: x y z u v w p q r must be individual variables in Prover9. So, ``p(x)'' is incorrect because ``p'' is an illegal predicate name, but ``P(x)'' is correct. Generate the final formula in the boxed format.

\vspace{5pt}

Proposition: \{proposition\}

\end{tcolorbox}
\caption{Prompt template for the initial step in the ablation where the FOL parser parses propositions directly to the Prover9 syntax.}
\label{fig:prompt_direct_parsing_initial}
\end{figure*}

\begin{figure*}[h]
\small
\begin{tcolorbox}[
    colback=white,
    colframe=bordercolor,
    arc=3mm,
    boxrule=0.75pt,
    width=0.95\textwidth,
    left=5pt,
    right=5pt,
    top=5pt,
    bottom=5pt,
]
\ttfamily
\color{textgray}
\setlength{\parindent}{0pt}
\setlength{\parfillskip}{0pt plus 1fil}
\emergencystretch=3em
\sloppy
\raggedright

You are given a syllogism in natural language, consisting of \{num\_premises \} premises and a conclusion. You task is to convert premises and conclusion of the syllogism to first-order logic. For logical operators, use LaTeX symbols: \textbackslash forall, \textbackslash exists, \textbackslash land, \textbackslash lor, \textbackslash neg, \textbackslash rightarrow. Make sure to use consistent mapping from natural language to FOL predicates. If possible, parse each proposition to a formula consisting of two atomic formulas, each of them a unary predicate. Each unique predicate should be represented by a lowercase (or camel-case) word. Do not focus on validity of the syllogism, only parse it to FOL. Generate the output with natural language propositions and first-order logic formulas in the JSON format: [\{``proposition'': ``...'', ``fol\_formula'': ``...''\}, ...]

\vspace{5pt}

Syllogism:

\vspace{2pt}

\{syllogism\}

\end{tcolorbox}
\caption{Prompt template for the FOL parser used for single-step parsing. Unlike the multi-step setup, the LLM is instructed to translate the entire syllogism to a FOL representation and generate the output in JSON format.}
\label{fig:prompt_single_step}
\end{figure*}

\begin{figure*}[h]
\small
\begin{tcolorbox}[
    colback=white,
    colframe=bordercolor,
    arc=3mm,
    boxrule=0.75pt,
    width=0.95\textwidth,
    left=5pt,
    right=5pt,
    top=5pt,
    bottom=5pt,
]
\ttfamily
\color{textgray}
\setlength{\parindent}{0pt}
\setlength{\parfillskip}{0pt plus 1fil}
\emergencystretch=3em
\sloppy
\raggedright

You are given an argument consisting of two premises and a conclusion, expressed in first-order logic. Your task is to analyze the formulas and decide whether the conclusion logically follows from the premises.

\vspace{5pt}

Do not focus on notational details of the first-order logic representation or on the specific names of predicates. Focus only on the logical structure and validity of the argument.

\vspace{5pt}

Premises:

\vspace{5pt}

\{premises\}

\vspace{5pt}

Conclusion:

\vspace{5pt}

\{conclusion\}

\vspace{5pt}

Perform analysis and give the final answer in boxed format:

- \textbackslash boxed\{true\} if the conclusion is logically valid

- \textbackslash boxed\{false\} if it is not

\end{tcolorbox}
\caption{Prompt template for the LLM prover.}
\label{fig:prompt_llm_prover}
\end{figure*}

\begin{figure*}[h]
\small
\begin{tcolorbox}[
    colback=white,
    colframe=bordercolor,
    arc=3mm,
    boxrule=0.75pt,
    width=0.95\textwidth,
    left=5pt,
    right=5pt,
    top=5pt,
    bottom=5pt,
]
\ttfamily
\color{textgray}
\setlength{\parindent}{0pt}
\setlength{\parfillskip}{0pt plus 1fil}
\emergencystretch=3em
\sloppy
\raggedright

You are given a syllogism that has been determined to be valid. Your task is to identify which premises are actually relevant (necessary) for the conclusion to follow.

\vspace{5pt}

Premises:

\vspace{2pt}

\{premises\}

\vspace{5pt}

Conclusion: \{conclusion\}

\vspace{2pt}

Return ONLY a JSON array of 0-based indices of the relevant premises. For example: [0, 2]

\end{tcolorbox}
\caption{Prompt template for the LLM-based relevant premise retrieval.}
\label{fig:prompt_llm_retrieval}
\end{figure*}

\end{document}